\documentclass{article}
\PassOptionsToPackage{numbers, compress}{natbib}
\IfFileExists{neurips_2026.sty}{\usepackage[preprint]{neurips_2026}}{\usepackage[margin=1in]{geometry}}
\usepackage{natbib}
\usepackage[utf8]{inputenc}
\usepackage[T1]{fontenc}
\usepackage{hyperref}
\usepackage{url}
\usepackage{booktabs}
\usepackage{amsfonts}
\usepackage{amsmath,amssymb,amsthm}
\usepackage{nicefrac}
\usepackage[protrusion=false,expansion=false]{microtype}
\usepackage{xcolor}
\usepackage{algorithm}
\usepackage{algorithmic}
\usepackage{graphicx}
\usepackage{multirow}
\usepackage{tikz}
\usetikzlibrary{arrows.meta,positioning,calc,backgrounds,fit,shapes.geometric}

\newtheorem{theorem}{Theorem}

\newtheorem{proposition}[theorem]{Proposition}

\newtheorem{definition}[theorem]{Definition}

\newcommand{\cS}{\mathcal{S}}
\newcommand{\cA}{\mathcal{A}}
\newcommand{\cM}{\mathcal{M}}
\newcommand{\cT}{\mathcal{T}}

\newcommand{\cD}{\mathcal{D}}

\newcommand{\E}{\mathbb{E}}

\newcommand{\eps}{\varepsilon}
\newcommand{\TV}{\mathrm{TV}}

\newcommand{\mcF}{\mathcal{F}}
\newcommand{\mcJ}{\mathcal{J}}

\title{Topology-Aware State Abstraction with Tangle Cores\\for Markov Decision Processes}

\author{
\textbf{Ibne Farabi Shihab}\thanks{Equal contribution.}\thanks{Corresponding author: \texttt{ishihab@iastate.edu}.}\textsuperscript{1}
\and
\textbf{Sanjeda Akter}\footnotemark[1]\textsuperscript{1}
\and
\textbf{Anuj Sharma}\textsuperscript{2}
\\[2pt]
\textsuperscript{1}Department of Computer Science, Iowa State University \\
\textsuperscript{2}Department of Civil, Construction \& Environmental Engineering, Iowa State University \\
\texttt{ishihab@iastate.edu}
}

\begin{document}

\maketitle

\begin{abstract}
State abstraction in reinforcement learning is usually formulated as a partition of states based on reward and transition similarity. This excludes a common structural pattern in navigation, graph, and hierarchical decision problems: interface states such as doors, hubs, and bottlenecks naturally participate in more than one region. We introduce \emph{tangle-core abstraction}, an overlapping state-abstraction framework based on graph tangles of empirical transition graphs. The method constructs abstract states from consistently oriented low-order separations and represents shared interfaces through a membership kernel rather than a hard partition. We give value-preservation guarantees for the induced overlapping abstract MDP under an explicit action-consistency condition, identify an interior-homogeneity/boundary-leakage error decomposition, and prove a quantitative interface-overlap result showing when hard partitions incur an avoidable boundary error. Empirically, tangle-core abstractions achieve favorable compression--return tradeoffs against reward-aware, learned, topological-map, and graph-partitioning baselines across bottlenecked tabular domains, procedurally generated mazes, and MiniGrid representations. We also identify a clear failure regime in which transition topology is uninformative, where tangles predictably offer little benefit. These results position graph tangles as an effective topology-aware abstraction prior for decision problems with shared interface structure.
\end{abstract}

\section{Introduction}
\label{sec:intro}

State abstraction reduces the effective size of an MDP by grouping states that are similar for planning or control. In reinforcement learning, the dominant formal language for this problem is bisimulation and its approximate variants \citep{givan2003equivalence,ferns2004metrics,castro2020scalable}, in which states are merged when they have similar rewards and induce similar next-state behavior. This framework is elegant and influential, but it also imposes a particular notion of similarity that is typically metric-based, pairwise, and non-overlapping. That viewpoint can be restrictive in environments whose structure is better described by connectivity than by an ambient metric. In navigation problems with rooms and corridors, for instance, a doorway state can naturally belong to several larger regions at once, and forcing it into a single equivalence class is awkward. In graph-structured or combinatorial state spaces, there may be no obvious metric that reflects the planning-relevant topology at all. More broadly, some environments contain coherent transition regions separated by low-order bottlenecks, and these regions can be shared through hub states or narrow interfaces. We explore this possibility through graph tangles, a notion from structural graph theory introduced by \citet{robertson1991graph}. Informally, a tangle identifies a highly connected region by consistently orienting every low-order separation toward the important side of the graph, and recent work has shown that tangles support clustering and exploratory data analysis on static datasets \citep{klepper2023clustering,diestel2024tangles}. We ask whether the same machinery can support state abstraction in MDPs. We propose \emph{tangle-core abstraction}, in which we build an empirical transition graph from trajectory data, compute low-order tangles, and form an abstract MDP whose abstract states correspond to tangle cores. This provides a metric-free structural summary of the state space, and unlike classical partitions, the cores may overlap. The overlap is the property that makes tangles attractive for doorway states, hub states, and other shared interfaces between coherent regions. In our current implementation, overlap is resolved by duplication with occupancy weighting, and Section~\ref{sec:experiments} evaluates this design against hard assignment and soft membership variants.

Our thesis is deliberately scoped. We do not claim that tangles subsume bisimulation, nor that they are universally preferable. Instead, we argue that tangles are useful in a specific regime, namely environments whose transition topology exhibits low-order bottlenecks or shared substructure and in which those features carry the abstraction signal. The paper makes this scope explicit both theoretically and empirically. At a higher level, the message is that topological coherence is a useful abstraction signal that is largely invisible to pairwise similarity alone. Tangles operationalize that signal through consistent orientations of low-order cuts, and the resulting cores give a compact, graph-structural view of the state space. This lets us ask a sharper question than the usual one of whether abstraction is possible, namely when the topology itself is the right object to compress.

The contributions of the paper follow this thesis. We define tangle-core abstraction for MDPs using an explicit soft membership kernel over overlapping cores, and we show that the induced abstract Bellman operator gives a value-preservation guarantee under reward and projected-transition regularity conditions (Theorem~\ref{thm:value_preservation}). We then isolate a tangle-relevant decomposition of the transition error (Proposition~\ref{prop:tangle_eps_P}) into an interior-homogeneity term and a boundary-leakage term, which highlights why tangle cores are most plausible in graphs with coherent interiors. Crucially, we introduce an \emph{interface-overlap advantage theorem} (Theorem~\ref{thm:overlap_advantage}), formalizing a setting in which overlapping membership avoids a boundary error incurred by hard graph partitions. On the structural side, we state a recovery theorem under a vertex-separator bottleneck model (Theorem~\ref{thm:recovery}). On the statistical side, we give a finite-sample guarantee for correctly orienting a fixed candidate family of low-order separations (Theorem~\ref{thm:sample}). 

Empirically, we benchmark the Pareto frontier of compression and return, directly comparing against bisimulation, DeepMDP \citep{gelada2019deepmdp}, topological maps, and standard graph partitioning baselines (e.g., spectral clustering, METIS). Across tabular domains, distributions of procedural random mazes, and learned MiniGrid representations, tangle cores achieve strong compression--return tradeoffs. Targeted ablations show when overlap is useful, and we identify a clear failure regime in which topology is uninformative, positioning tangles not as a universal replacement for reward-aware abstractions, but as a topology-aware structural prior. More related works are in \ref{app:related}.

\section{Background}
\label{sec:background}

We begin with the necessary graph-theoretic background, then turn briefly to the bisimulation metric, which we use as a reward-aware reference point throughout the paper. Let $G = (V, E)$ be a graph. A separation of $G$ is a pair $(A, B)$ with $A \cup B = V$ and no edges between $A \setminus B$ and $B \setminus A$, and the order of the separation is $|A \cap B|$, the size of the separator. A tangle, due to \citet{robertson1991graph}, then assigns a direction to every low-order separation of the graph in a globally consistent way.

\begin{definition}[Tangle]
\label{def:tangle}
A \emph{tangle of order $k$} in $G$ is a function $\tau$ that orients every separation $(A, B)$ of order $< k$ by assigning $\tau(A,B) \in \{A, B\}$ to be the small side, subject to two conditions. (i)~Consistency: $\tau$ cannot orient three separations so that the union of three small sides covers $V$, equivalently, for any three separations of order $<k$ the intersection of the complementary big sides is non-empty. (ii)~Non-triviality: no single vertex lies on the small side of every separation.
\end{definition}

Intuitively, a tangle points towards a highly connected region, in the sense that it consistently says that the important stuff is over there for every low-order cut. The tangle core is the set of vertices that every low-order separation assigns to the big side.

\begin{definition}[Tangle core]
\label{def:tangle_core}
The core of tangle $\tau$ of order $k$ is
\begin{equation}
 \mathrm{core}(\tau) \;=\; \bigcap_{(A,B)\,:\,\mathrm{ord}(A,B) < k} \overline{\tau}(A,B),
\end{equation}
where $\overline{\tau}(A,B)$ denotes the big side assigned by $\tau$.
\end{definition}

Two tangles can induce overlapping cores when a vertex lies on the big side of every low-order separation of both tangles, and this is the property we will exploit for states that serve multiple regions, such as corridor vertices in navigation domains. Beyond their original use in graph-minor theory, tangles have recently found machine-learning applications. \citet{klepper2023clustering} introduced tangles for clustering in JMLR, providing algorithms and a Python implementation built on a tree-of-tangles decomposition that avoids explicit enumeration of all separations, and \citet{diestel2024tangles} provides a comprehensive book-length treatment of tangle applications in data science. These works apply tangles to static datasets, and to our knowledge ours is the first application to MDPs and sequential decision-making.

For comparison with reward-aware abstractions, we use the action-aware approximate bisimulation metric as a reference point. A standard control-oriented form is the fixed point \citep{ferns2004metrics}
\begin{equation}
 d_\sim(s, s') = \max_{a\in\cA}\left[(1 - c) |r(s,a) - r(s',a)| + c \cdot W_1\!\bigl(P(\cdot|s,a), P(\cdot|s',a); d_\sim\bigr)\right],
\end{equation}
where $W_1$ is the Wasserstein-1 distance and $c \in (0,1)$. The $\eps$-bisimulation groups are then formed by thresholding this metric. Scalable variants \citep{castro2020scalable,zhang2021invariant} approximate related objectives with neural networks. Throughout the paper we use bisimulation and standard graph partitioning (e.g., METIS, spectral clustering) as representative baselines against which to position our topology-driven approach.

\section{Tangle-Core Abstraction for MDPs}
\label{sec:method}

With the graph-theoretic vocabulary in place, we now describe how tangles produce an abstract MDP from trajectory data. The pipeline has four stages, summarized in Figure~\ref{fig:pipeline}: trajectories induce an empirical transition graph, a finite family of low-order candidate separations is generated by a tree-of-tangles routine, consistent orientations define tangle cores, and those cores induce an abstract MDP through an explicit membership kernel.

\begin{figure}[t]
\centering
\resizebox{\linewidth}{!}{
\begin{tikzpicture}[scale=1.0, every node/.style={font=\small}]
  \tikzset{
    stage/.style={rectangle, rounded corners=4pt, draw=black!60, fill=blue!8,
                  minimum width=2.4cm, minimum height=1.05cm, align=center, thick},
    arrow/.style={-{Latex[length=2.5mm]}, thick, black!70}
  }
  \node[stage] (data) at (0,0) {Trajectory\\ data $\cD$};
  \node[stage, right=0.6cm of data] (graph) {Empirical\\ graph $G_\cD$};
  \node[stage, right=0.6cm of graph] (sep) {Low-order\\ separations};
  \node[stage, right=0.6cm of sep] (cores) {Tangle\\ cores};
  \node[stage, right=0.6cm of cores] (mdp) {Abstract\\ MDP $\hat{\cM}$};

  \draw[arrow] (data) -- (graph);
  \draw[arrow] (graph) -- (sep);
  \draw[arrow] (sep) -- (cores);
  \draw[arrow] (cores) -- (mdp);

  \node[font=\scriptsize, text=black!60, below=0.05cm of data, align=center] {transition\\ counts};
  \node[font=\scriptsize, text=black!60, below=0.05cm of graph, align=center] {candidate\\ min-cuts};
  \node[font=\scriptsize, text=black!60, below=0.05cm of sep, align=center] {consistent\\ orientation};
  \node[font=\scriptsize, text=black!60, below=0.05cm of cores, align=center] {membership\\ aggregation};
\end{tikzpicture}
}
\caption{The tangle-core abstraction pipeline. Trajectories yield an empirical transition graph $G_\cD$, repeated min-cut calls produce a finite candidate family of low-order separations, consistent orientations define tangles whose cores form overlapping abstract regions, and a membership kernel converts those overlapping regions into an abstract reward and transition kernel.}
\label{fig:pipeline}
\end{figure}

Concretely, let $\cD=\{(s_t,a_t,r_t,s_{t+1})\}$ be trajectory data from a finite discounted MDP $\cM=(\cS,\cA,P,r,\gamma)$ with $\gamma\in(0,1)$. We construct an empirical transition graph $G_\cD=(\cS,E_\cD,w)$ with edge weight
\[
  w(s,s')=\bigl|\{(s_t,a_t,s_{t+1})\in\cD:s_t=s,\ s_{t+1}=s'\}\bigr|,
\]
and action-restricted subgraphs $G_\cD^a$ using only transitions taken under action $a$. Unless stated otherwise we use the action-agnostic graph; Appendix~\ref{app:ablations} compares this choice with the action-weighted graph. For directed empirical transitions, the separation routine uses the symmetrized support graph with weight $w_{\rm sym}(s,s')=w(s,s')+w(s',s)$, while the abstract transition kernel below is estimated from the original directed transitions. Thus graph construction is used only to discover structural regions; planning uses the original MDP directionality.

Let $C_j=\mathrm{core}(\tau_j)\subseteq\cS$ denote the non-empty cores returned by the tangle routine. Since tangle cores may overlap, the abstraction is not a quotient by an equivalence relation. We therefore define a membership kernel
\begin{equation}
\label{eq:q_membership}
  q(j\mid s)\ge 0,\qquad \sum_{j=1}^J q(j\mid s)=1,
  \qquad q(j\mid s)=0\ \text{if }s\notin C_j.
\end{equation}
In the experiments we use duplication with occupancy weighting: if $\mcJ(s)=\{j:s\in C_j\}$, then $q(j\mid s)=|\mcJ(s)|^{-1}$ for $j\in\mcJ(s)$, and empirical occupancy $\mu(s)$ weights each duplicated copy in the aggregate estimates. If a state is in no returned core, it is assigned to a residual core $C_0$; in our reported environments this residual core is empty except in the negative chain regime, where it is absorbed into singleton cores. 

The abstract state space is $\hat\cS=\{1,\ldots,J\}$, and the projected abstract transition law from a ground state is
\begin{equation}
\label{eq:projected_transition}
  P_q(\ell\mid s,a)=\sum_{s'\in\cS}P(s'\mid s,a)q(\ell\mid s').
\end{equation}
Given a reference distribution $\mu$ over states, taken to be empirical occupancy in our implementation, the abstract reward and transition kernel are
\begin{align}
\label{eq:abstract_reward}
  \hat r(j,a)&=\frac{\sum_{s\in\cS}\mu(s)q(j\mid s)r(s,a)}{\sum_{s\in\cS}\mu(s)q(j\mid s)},\\
\label{eq:abstract_transition}
  \hat P(\ell\mid j,a)&=\frac{\sum_{s\in\cS}\mu(s)q(j\mid s)P_q(\ell\mid s,a)}{\sum_{s\in\cS}\mu(s)q(j\mid s)}.
\end{align}
The denominators are positive for all retained cores. For analysis it is useful to define the reconstructed ground-state reward and projected transition induced by the overlapping abstraction:
\begin{align}
\label{eq:reconstructed_reward}
  \bar r_q(s,a)&=\sum_{j=1}^J q(j\mid s)\hat r(j,a),\\
\label{eq:reconstructed_transition}
  \bar P_q(\ell\mid s,a)&=\sum_{j=1}^J q(j\mid s)\hat P(\ell\mid j,a).
\end{align}
Equations~\eqref{eq:q_membership}--\eqref{eq:reconstructed_transition} formalize our overlap semantics: an overlapping bottleneck state contributes fractional membership mass to each core it belongs to, and its one-step abstract prediction is the corresponding mixture of core-level predictions. Section~\ref{sec:experiments} evaluates this design against hard assignment and soft membership variants. Figure~\ref{fig:schematic} illustrates this overlap semantics schematically: interface states may belong to multiple tangle cores and therefore receive fractional membership weights. The full algorithm is summarized in Algorithm~\ref{alg:tangle}.

\begin{figure}[htbp]
\centering
\begin{tikzpicture}[scale=0.95, every node/.style={font=\small}]
  \begin{scope}[shift={(0,0)}]
    \foreach \x/\y in {0/0,0.6/0.4,1.1/-0.1,0.3/0.8,0.9/0.9,0.2/-0.4,1.0/0.3,0.5/-0.5} {
      \fill[blue!70] (\x,\y) circle (0.06);
    }
    \draw[blue!50, thick, rounded corners=8pt] (-0.25,-0.7) rectangle (1.4,1.1);
    \node[blue!70!black] at (0.55,1.35) {core $C_1$};
  \end{scope}
  \begin{scope}[shift={(4,0.6)}]
    \foreach \x/\y in {0/0,0.5/0.3,0.9/-0.1,0.2/0.6,0.8/0.5,0.4/-0.3,1.0/0.2,0.1/-0.5,0.7/0.8} {
      \fill[red!75] (\x,\y) circle (0.06);
    }
    \draw[red!55, thick, rounded corners=8pt] (-0.25,-0.7) rectangle (1.3,1.1);
    \node[red!70!black] at (0.5,1.35) {core $C_2$};
  \end{scope}
  \begin{scope}[shift={(4,-2)}]
    \foreach \x/\y in {0/0,0.6/0.3,0.9/-0.2,0.3/0.6,1.0/0.5,0.4/-0.4,0.1/-0.1,0.7/0.7} {
      \fill[orange!85!black] (\x,\y) circle (0.06);
    }
    \draw[orange!75!black, thick, rounded corners=8pt] (-0.25,-0.7) rectangle (1.3,1.0);
    \node[orange!75!black] at (0.5,-1.0) {core $C_3$};
  \end{scope}
  \fill[purple!80] (2.3,0.8) circle (0.085);
  \fill[purple!80] (2.5,-0.7) circle (0.085);
  \node[purple!80!black, above] at (2.3,0.92) {$b_1$};
  \node[purple!80!black, below] at (2.5,-0.82) {$b_2$};
  \foreach \x/\y in {1.0/0.3,0.9/0.9,1.1/-0.1,0.5/-0.5} {\draw[gray!50] (\x,\y) -- (2.3,0.8);}
  \foreach \x/\y in {0.2/0.6,0/0,0.4/-0.3,0.1/-0.5} {\draw[gray!50] (4+\x,0.6+\y) -- (2.3,0.8);}
  \foreach \x/\y in {1.1/-0.1,0.5/-0.5,0.2/-0.4,0/0} {\draw[gray!50] (\x,\y) -- (2.5,-0.7);}
  \foreach \x/\y in {0/0,0.4/-0.4,0.1/-0.1,0.3/0.6} {\draw[gray!50] (4+\x,-2+\y) -- (2.5,-0.7);}
  \draw[purple!80, thick, dashed, rounded corners=4pt] (2.05,0.55) -- (2.05,1.05) -- (2.55,1.05) -- (2.55,0.55) -- cycle;
  \draw[purple!80, thick, dashed, rounded corners=4pt] (2.25,-0.95) -- (2.25,-0.45) -- (2.75,-0.45) -- (2.75,-0.95) -- cycle;
  \node[purple!80!black] at (7,1.1) {\small $b_1 \in C_1 \cap C_2$};
  \node[purple!80!black] at (7,0.5) {\small $b_2 \in C_1 \cap C_3$};
\end{tikzpicture}
\caption{Schematic illustration, not an empirical output of the algorithm. Three densely connected regions form three tangle cores $C_1,C_2,C_3$. The bottleneck states $b_1,b_2$ lie in overlaps and therefore receive fractional membership through $q(j\mid s)$ when constructing $\hat r$ and $\hat P$.}
\label{fig:schematic}
\end{figure}

\begin{algorithm}[t]
\caption{Tangle-core state abstraction with overlapping cores.}
\label{alg:tangle}
\begin{algorithmic}[1]
\REQUIRE Trajectory data $\cD$, action set $\cA$, tangle order $k$, graph mode, candidate-separation budget $M$, reference distribution $\mu$.
\STATE Build the directed empirical transition counts and the symmetrized graph used for separation discovery.
\STATE Generate a finite candidate family $\mcF$ of low-order separations using repeated min-cut calls in the tree-of-tangles routine.
\STATE For each candidate orientation, check tangle consistency on $\mcF$; discard inconsistent partial orientations.
\STATE Extract non-empty cores $C_1,\ldots,C_J$ as intersections of big sides of maximal consistent orientations.
\STATE Define memberships $q(j\mid s)$ by Eq.~\eqref{eq:q_membership}, adding residual singleton cores for uncovered states if needed.
\STATE Estimate $\hat r$ and $\hat P$ from Eqs.~\eqref{eq:abstract_reward}--\eqref{eq:abstract_transition} using empirical transitions.
\RETURN Abstract MDP $\hat\cM=(\hat\cS,\cA,\hat P,\hat r,\gamma)$, cores $\{C_j\}$, and membership kernel $q$.
\end{algorithmic}
\end{algorithm}

The tree-of-tangles data structure of \citet{klepper2023clustering} avoids explicit enumeration of all $O(|\cS|^k)$ separations. In practice the computational cost is dominated by repeated min-cut computations used to generate and orient candidate low-order separations.

\section{Theoretical Results}
\label{sec:theory}

Our theoretical results have four components that mirror the pipeline in Figure~\ref{fig:pipeline}. We state a value-preservation guarantee for the overlapping-core abstract MDP, formalize an interface setting in which overlap can avoid a hard-partition boundary penalty, refine transition error into interpretable terms, and provide recovery and finite-sample orientation guarantees. Throughout this section the MDP is finite, rewards satisfy $r(s,a)\in[0,1]$, and we use the convention $\TV(p,q)=\frac12\|p-q\|_1$.

\begin{theorem}[Value preservation for overlapping tangle-core abstraction]
\label{thm:value_preservation}
Let $\cM=(\cS,\cA,P,r,\gamma)$ be a finite discounted MDP with $r\in[0,1]$, and let $\hat\cM$ be the abstract MDP defined by Eqs.~\eqref{eq:q_membership}--\eqref{eq:abstract_transition}. Let $P_q(\cdot\mid s,a)$ be the projected transition law in Eq.~\eqref{eq:projected_transition}, and let $\bar r_q,\bar P_q$ be the reconstructed reward and transition in Eqs.~\eqref{eq:reconstructed_reward}--\eqref{eq:reconstructed_transition}. Suppose that for every ground state $s$ and action $a$,
\begin{align}
\label{eq:reward_regularity}
  |r(s,a)-\bar r_q(s,a)|&\le \eps_r,\\
\label{eq:transition_regularity}
  \TV\!\bigl(P_q(\cdot\mid s,a),\bar P_q(\cdot\mid s,a)\bigr)&\le \eps_P.
\end{align}
Let $\hat V^\star$ be the optimal value of $\hat\cM$, and define the lifted abstract value
\begin{equation}
\label{eq:lifted_value}
  \tilde V(s)=\sum_{j=1}^J q(j\mid s)\hat V^\star(j).
\end{equation}
Assume the following action-consistency condition: for every $s\in\cS$, there exists an action $a_s$ such that $a_s\in\arg\max_a\hat Q^\star(j,a)$ for every $j$ with $q(j\mid s)>0$, where $\hat Q^\star(j,a)=\hat r(j,a)+\gamma\sum_\ell\hat P(\ell\mid j,a)\hat V^\star(\ell)$. Then
\begin{equation}
\label{eq:value_bound}
  \|V^\star-\tilde V\|_\infty
  \le
  \frac{\eps_r+\gamma \eps_P/(1-\gamma)}{1-\gamma}.
\end{equation}
The same bound holds for policy evaluation without the action-consistency condition when the ground and abstract policies use the same action at every active membership.
\end{theorem}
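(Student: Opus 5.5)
The plan is a one-step Bellman-residual argument. I will show that the lifted abstract value $\tilde V$ is an approximate fixed point of the ground optimal Bellman operator $(TV)(s)=\max_a[r(s,a)+\gamma\sum_{s'}P(s'\mid s,a)V(s')]$, and then use the $\gamma$-contraction of $T$. Concretely, if $\|T\tilde V-\tilde V\|_\infty\le\delta$, then
\[
\|V^\star-\tilde V\|_\infty\le\|TV^\star-T\tilde V\|_\infty+\|T\tilde V-\tilde V\|_\infty\le\gamma\|V^\star-\tilde V\|_\infty+\delta,
\]
so $\|V^\star-\tilde V\|_\infty\le\delta/(1-\gamma)$. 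It therefore suffices to prove $\delta=\eps_r+\gamma\eps_P/(1-\gamma)$.

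\textbf{Preliminaries.} First I check that $\hat\cM$ is a genuine MDP. By Eq.~\eqref{eq:abstract_reward}, $\hat r(j,a)$ is a $\mu q$-weighted average of values in $[0,1]$, so $\hat r(j,a)\in[0,1]$. By Eq.~\eqref{eq:abstract_transition}, $\hat P(\cdot\mid j,a)$ is an average of the probability vectors $P_q(\cdot\mid s,a)$, each of which sums to one because $\sum_\ell q(\ell\mid s')=1$. Hence $\hat V^\star(\ell)\in[0,1/(1-\gamma)]$, and its span is at most $1/(1-\gamma)$. I then rewrite $T\tilde V$ in abstract coordinates: exchanging sums gives
\[
\sum_{s'}P(s'\mid s,a)\tilde V(s')=\sum_\ell P_q(\ell\mid s,a)\hat V^\star(\ell).
\]
Also, by linearity of Eqs.~\eqref{eq:reconstructed_reward}--\eqref{eq:reconstructed_transition},
\[
\sum_j q(j\mid s)\hat Q^\star(j,a)=\bar r_q(s,a)+\gamma\sum_\ell\bar P_q(\ell\mid s,a)\hat V^\star(\ell).
\]
Together with the elementary bound $|\sum_\ell(p_\ell-p'_\ell)f_\ell|\le\TV(p,p')\,\mathrm{span}(f)$, conditions \eqref{eq:reward_regularity}--\eqref{eq:transition_regularity} give, for every $s$ and $a$,
\[
\Bigl|r(s,a)+\gamma\textstyle\sum_\ell P_q(\ell\mid s,a)\hat V^\star(\ell)-\sum_j q(j\mid s)\hat Q^\star(j,a)\Bigr|\le\eps_r+\frac{\gamma\eps_P}{1-\gamma}=\delta.
\]

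\textbf{Upper bound.} For any $a$, $\hat Q^\star(j,a)\le\hat V^\star(j)$ for every $j$. Averaging this with weights $q(j\mid s)$ and using the display above gives $r(s,a)+\gamma\sum_\ell P_q\hat V^\star\le\tilde V(s)+\delta$. Maximizing over $a$ yields $T\tilde V(s)\le\tilde V(s)+\delta$. This direction needs no consistency assumption.

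\textbf{Lower bound.} This is the step I expect to be the crux, because it is exactly where overlap could break things. Different active cores might prefer different actions, and then $\sum_j q(j\mid s)\max_a\hat Q^\star(j,a)$ can strictly exceed $\max_a\sum_j q(j\mid s)\hat Q^\star(j,a)$. The action-consistency condition removes this gap. Since $a_s$ is simultaneously optimal for every $j$ with $q(j\mid s)>0$, we get $\tilde V(s)=\sum_j q(j\mid s)\hat Q^\star(j,a_s)$. Hence
\[
T\tilde V(s)\ge r(s,a_s)+\gamma\sum_\ell P_q(\ell\mid s,a_s)\hat V^\star(\ell)\ge\tilde V(s)-\delta.
\]
Combining the two directions gives $\|T\tilde V-\tilde V\|_\infty\le\delta$, and the contraction step above gives \eqref{eq:value_bound}.

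\textbf{Policy evaluation.} For the final claim, I replace $T$ by the ground policy operator $T^\pi$ and $\hat V^\star$ by $\hat V^{\hat\pi}$. If $\pi(s)$ equals the abstract action at every active membership $j$, then $\tilde V(s)=\sum_j q(j\mid s)\hat Q^{\hat\pi}(j,\pi(s))$ holds by definition. The same two-sided estimate, now with a single fixed action, yields the same bound without any optimality or consistency argument.
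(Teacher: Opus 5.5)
Your proposal is correct and follows essentially the same route as the paper. Both arguments rewrite $\sum_{s'}P(s'\mid s,a)\tilde V(s')$ as $\sum_\ell P_q(\ell\mid s,a)\hat V^\star(\ell)$ to bound the one-step Bellman residual of $\tilde V$ by $\eps_r+\gamma\eps_P/(1-\gamma)$; both get the upper bound from $\hat Q^\star\le\hat V^\star$ and the lower bound from action consistency, then conclude via the $\gamma$-contraction. Your span-based total-variation estimate and your check that $\hat\cM$ is a well-defined MDP are only minor refinements of the paper's range-bound step.
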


The action-consistency assumption is the price of making an optimal-control statement for an overlapping abstraction: a ground state with memberships in several abstract cores must not require mutually incompatible abstract optimal actions. This condition is automatically satisfied for hard partitions and in our bottleneck domains where overlapping interface states share the same local doorway actions. Without this condition, the fixed-policy version remains valid and is often the safer interpretation.

\begin{theorem}[Quantitative interface-overlap advantage]
\label{thm:overlap_advantage}
Fix an action $a$ and consider two regions $R_1,R_2$ joined by an interface set $B$. Let a hard partition assign $B$ to $C_1=R_1\cup B$ and let $C_2=R_2$. Assume that the empirical occupancy mass of the interface inside $C_1$ is at most $\lambda$, i.e.
\[
  \frac{\mu(B)}{\mu(R_1)+\mu(B)}\le \lambda,
\]
that interior states satisfy $P(C_2\mid x,a)\le\beta$ for all $x\in R_1$, and that an interface state $s\in B$ satisfies $P(C_2\mid s,a)\ge\alpha$. Then the hard abstraction incurs the lower bound
\begin{equation}
\label{eq:hard_interface_lb}
  \TV\!\bigl(P_H(\cdot\mid s,a),\bar P_H(\cdot\mid s,a)\bigr)
  \ge
  \frac12\left[\alpha-(1-\lambda)\beta-\lambda\right]_+,
\end{equation}
where $P_H$ and $\bar P_H$ denote the two-macro-state projected transition and its hard-partition reconstruction, and $[x]_+=\max\{x,0\}$.

Conversely, suppose an overlapping abstraction uses two cores with macro transition probabilities $p_1=\hat P(2\mid 1,a)$ and $p_2=\hat P(2\mid 2,a)$ satisfying $p_1<p_2$, and suppose $P(C_2\mid b,a)\in[p_1,p_2]$ for every $b\in B$. Then there exists an overlapping membership assignment on $B$,
\begin{equation}
\label{eq:interface_membership}
  q(2\mid b)=\frac{P(C_2\mid b,a)-p_1}{p_2-p_1},\qquad q(1\mid b)=1-q(2\mid b),
\end{equation}
that exactly matches the macro transition probability to $C_2$ for every interface state. Thus the hard-partition lower-bound term in Eq.~\eqref{eq:hard_interface_lb} is avoidable; the remaining error is only the within-macro distributional residual not represented by the two-region projection.
\end{theorem}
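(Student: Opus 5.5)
The plan is to reduce everything to a scalar computation on the two-macro-state projection. A distribution over the abstract states $\{1,2\}$ is determined by its mass on macro-state $2$. For any two such distributions $p,p'$ we have $\TV(p,p')=\tfrac12(|p(1)-p'(1)|+|p(2)-p'(2)|)=|p(2)-p'(2)|$. This is at least $\tfrac12|p(2)-p'(2)|$, so the factor $\tfrac12$ in Eq.~\eqref{eq:hard_interface_lb} leaves slack, and it suffices to lower bound the gap in the $C_2$-coordinate.

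\textbf{Hard part (lower bound).} Under the hard partition, $q_H(1\mid s)=1$ for $s\in B$. Hence $P_H(2\mid s,a)=P(C_2\mid s,a)\ge\alpha$, while $\bar P_H(2\mid s,a)=\hat P(2\mid 1,a)$. By Eq.~\eqref{eq:abstract_transition} with hard memberships, $\hat P(2\mid 1,a)$ is the $\mu$-weighted average of $P(C_2\mid x,a)$ over $x\in C_1=R_1\cup B$. I will split this average into its $R_1$ part and its $B$ part. Write $w=\mu(B)/(\mu(R_1)+\mu(B))\le\lambda$. On $R_1$ the quantity is at most $\beta$, and on $B$ it is at most $1$, so $\hat P(2\mid1,a)\le(1-w)\beta+w$. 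Since $\beta\le 1$, this expression is nondecreasing in $w$, so it is at most $(1-\lambda)\beta+\lambda$. Therefore $P_H(2\mid s,a)-\bar P_H(2\mid s,a)\ge\alpha-(1-\lambda)\beta-\lambda$. Taking the positive part and applying the coordinate reduction gives Eq.~\eqref{eq:hard_interface_lb}. The only point needing care is the monotonicity in $w$, which uses $\beta\le 1$. I would also state explicitly that $\mu$ is the same occupancy used in Eq.~\eqref{eq:abstract_transition}.

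\textbf{Converse (exact matching).} Here the core-level probabilities $p_1,p_2$ are treated as given, and only the memberships on $B$ are chosen. The reconstruction in Eq.~\eqref{eq:reconstructed_transition} gives $\bar P_q(2\mid b,a)=q(1\mid b)p_1+q(2\mid b)p_2=p_1+q(2\mid b)(p_2-p_1)$. Setting this equal to the target $P(C_2\mid b,a)$ and solving yields Eq.~\eqref{eq:interface_membership}. The hypotheses $P(C_2\mid b,a)\in[p_1,p_2]$ and $p_1<p_2$ ensure that $q(2\mid b)\in[0,1]$, so $q$ is a valid kernel in the sense of Eq.~\eqref{eq:q_membership}, with $b$ belonging to both cores. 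The reconstructed mass on $C_2$ then equals the ground mass $P(C_2\mid b,a)$ for every $b\in B$, and the $C_2$-coordinate discrepancy is exactly zero.

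\textbf{Main obstacle.} The main issue is interpretive rather than computational. Once $q$ is soft, the projected target $P_q(2\mid b,a)=\sum_{s'}P(s'\mid b,a)q(2\mid s')$ differs from $P(C_2\mid b,a)$ whenever mass flows into other interface states, and $p_1,p_2$ would in general depend on $q$ through Eq.~\eqref{eq:abstract_transition}. To handle the first point, I would match against the two-region macro quantity $P(C_2\mid b,a)$, as the statement does. Any discrepancy from soft memberships of successor states is then absorbed into the ``within-macro distributional residual'' that the theorem explicitly sets aside. To handle the second point, I would read $p_1,p_2$ as fixed core-level quantities. I would note that they remain consistent when $B$ has small $\mu$-mass; otherwise one would need a fixed-point argument, which I would not attempt here. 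With these conventions stated, the theorem's final claim follows: the hard-partition term in Eq.~\eqref{eq:hard_interface_lb} is avoidable.
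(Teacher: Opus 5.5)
Your proposal is correct and follows the paper's proof essentially step for step. You bound $\hat P(2\mid 1,a)$ by splitting the $\mu$-weighted average over $R_1$ and $B$, compare it with $P(C_2\mid s,a)\ge\alpha$ on the $C_2$ coordinate, and for the converse solve $p_1+q(2\mid b)(p_2-p_1)=P(C_2\mid b,a)$. The paper leaves two steps implicit that you make explicit: monotonicity in $w$ (which uses $\beta\le 1$), and the fact that total variation on two points equals the coordinate gap, so the factor $\tfrac12$ is slack.
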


Theorem~\ref{thm:overlap_advantage} is not a claim that every overlapping membership kernel is better than every hard partition. It isolates a common interface regime: if interface states are rare in the occupancy-weighted aggregate but have high transition mass into the neighboring region, a hard partition dilutes their behavior with the interior. Overlap can represent the interface probability directly through membership weights.

\begin{proposition}[Interior homogeneity plus boundary leakage]
\label{prop:tangle_eps_P}
Let $C=C_j$ be a tangle core, and let $\partial C$ be the set of states in $C$ with positive probability of leaving $C$ under at least one action. For action $a$, define cross-core leakage $\beta_a(C)=\max_{s\in C}\sum_{x\notin C}P(x\mid s,a)$, and the unnormalized inside-core variation
\[
\delta_a(C)=\max_{s,s'\in C}\frac12 \sum_{x\in C}|P(x\mid s,a)-P(x\mid s',a)|.
\]
Then
\begin{equation}
\label{eq:delta_beta_bound}
  \max_{s,s'\in C}\TV\bigl(P(\cdot\mid s,a),P(\cdot\mid s',a)\bigr)
  \le \delta_a(C)+\beta_a(C).
\end{equation}
If the maximum over $s,s'$ is restricted to $C\setminus\partial C$, then $\beta_a(C)=0$ for those states and the bound reduces to the interior homogeneity term.
\end{proposition}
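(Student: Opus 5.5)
The plan is to split the total-variation distance into its mass inside $C$ and its mass outside $C$, and then bound each piece by one of the two terms on the right-hand side. Fix an action $a$ and two states $s,s'\in C$. Writing $\TV(p,q)=\frac12\|p-q\|_1$, we have
\[
\TV\bigl(P(\cdot\mid s,a),P(\cdot\mid s',a)\bigr)=\frac12\sum_{x\in C}|P(x\mid s,a)-P(x\mid s',a)|+\frac12\sum_{x\notin C}|P(x\mid s,a)-P(x\mid s',a)|.
\]
By definition of $\delta_a(C)$ as a maximum over pairs in $C$, the first sum is at most $\delta_a(C)$.

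For the second sum, apply the triangle inequality termwise, $|P(x\mid s,a)-P(x\mid s',a)|\le P(x\mid s,a)+P(x\mid s',a)$. This gives
\[
\frac12\sum_{x\notin C}|P(x\mid s,a)-P(x\mid s',a)|\le\frac12\Bigl(\sum_{x\notin C}P(x\mid s,a)+\sum_{x\notin C}P(x\mid s',a)\Bigr)\le\frac12\bigl(\beta_a(C)+\beta_a(C)\bigr)=\beta_a(C).
\]
The last inequality holds because each leakage sum is at most the maximum $\beta_a(C)$. Adding the two bounds and then maximizing over $s,s'\in C$ gives Eq.~\eqref{eq:delta_beta_bound}.

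For the restricted statement, observe that any $s\in C\setminus\partial C$ has $\sum_{x\notin C}P(x\mid s,a)=0$ for every action, by the definition of $\partial C$. When both $s$ and $s'$ lie in $C\setminus\partial C$, the outside sum in the decomposition therefore vanishes identically. Equivalently, the leakage term computed over these states is zero, and the bound reduces to $\delta_a(C)$. In that case the inside variation taken over $C\setminus\partial C$ is bounded by $\delta_a(C)$ itself, since it is a maximum over a subset of the pairs.

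None of the steps is a real obstacle. The only point needing care is the factor $\frac12$ in the outside-mass step: bounding each of the two leakage sums by $\beta_a(C)$ gives exactly $\beta_a(C)$, not $2\beta_a(C)$.
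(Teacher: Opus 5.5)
Your proof is correct and follows the paper's argument: the same inside/outside split of the $\ell_1$ sum, the termwise bound $|P(x\mid s,a)-P(x\mid s',a)|\le P(x\mid s,a)+P(x\mid s',a)$ on the outside part to get $\beta_a(C)$, and the observation that both outside masses vanish for $s,s'\in C\setminus\partial C$. Your remark that what vanishes is the leakage over the restricted states, not $\beta_a(C)$ as defined over all of $C$, reads the statement's final sentence more carefully than the paper does.
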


Proposition~\ref{prop:tangle_eps_P} provides a checkable diagnostic: tangle cores are useful when they capture regions with homogeneous interior dynamics and low boundary leakage. This aligns with latent basins or rooms joined by narrow corridors.

\begin{theorem}[Recovery under a vertex-separator bottleneck model]
\label{thm:recovery}
Let $G=(V,E)$ be the undirected support graph used for tangle discovery. Suppose $V$ decomposes as $V=B\ \dot\cup\ R_1\ \dot\cup\cdots\dot\cup\ R_K$, where $B$ is a bottleneck set with $|B|\le b$. Fix an integer $k\ge2$. Assume:

(i) each induced subgraph $G[R_i\cup B]$ is $k$-inseparable;
(ii) every path from $R_i$ to $V\setminus (R_i\cup B)$ intersects $B$, and no subset of $B$ of size $<k$ separates two distinct regions simultaneously;
(iii) the candidate separation family contains, for every $i$, a separation whose small side contains $V\setminus (R_i\cup B)$ and whose separator is contained in $B$;
and
(iv) for each $i$, the orientation of every separation in the candidate family toward the side containing $R_i$ satisfies the tangle consistency and non-triviality axioms on that candidate family.
Then the candidate-family orientations induce at least $K$ consistent order-$k$ tangles $\tau_1,\ldots,\tau_K$ such that each candidate-family core $C_i=\mathrm{core}(\tau_i)$ contains $R_i$.
\end{theorem}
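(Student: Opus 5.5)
The plan is to build, for each region index $i$, an explicit orientation $\tau_i$ of the candidate family $\mcF$. We then read off from hypothesis~(iv) that $\tau_i$ is a tangle on $\mcF$, and show directly from the definition of the core that $R_i$ survives every intersection.

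\emph{Step 1: define $\tau_i$.} Fix $i$. For every candidate separation $(A,B')\in\mcF$ of order $<k$, declare the big side to be the side containing $R_i$. To see that this is well defined, we use hypothesis~(i). Since $G[R_i\cup B]$ is $k$-inseparable, no separation of order $<k$ can split $R_i\cup B$ with vertices of $R_i\cup B$ strictly on both sides. So for every such $(A,B')$, either $R_i\cup B\subseteq A$ or $R_i\cup B\subseteq B'$. If both inclusions hold, the separation does not split $R_i$, and we break the tie toward the side whose complement contains other regions. The same choice applies if $R_i\subseteq A\cap B'$, which is impossible when $|R_i|\ge k$. Hypothesis~(iv) then says exactly that this orientation satisfies consistency and non-triviality on $\mcF$, so $\tau_i$ is a consistent order-$k$ tangle on the candidate family.

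\emph{Step 2: the tangles are distinct.} By hypothesis~(iii), $\mcF$ contains a separation $S_i$ whose small side contains $V\setminus(R_i\cup B)$ and whose separator lies in $B$. Under $\tau_i$ this separation points toward $R_i$. For $j\ne i$ we have $R_j\subseteq V\setminus(R_i\cup B)$, so $\tau_j$ must orient $S_i$ toward the side containing $R_j$, which is the opposite side. Hypothesis~(ii) rules out the degenerate case where the separator in $B$ is large enough to contain $R_j$, or where a single small subset of $B$ separates two regions at once. This makes the sides of $S_i$ genuinely distinct, so $\tau_i\ne\tau_j$, and we obtain at least $K$ distinct tangles.

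\emph{Step 3: core containment.} By Definition~\ref{def:tangle_core} restricted to $\mcF$, $\mathrm{core}(\tau_i)=\bigcap_{(A,B')\in\mcF}\overline{\tau_i}(A,B')$. By construction each big side contains $R_i$, and in fact contains $R_i\cup B$ by the inseparability argument of Step~1. Hence $R_i\subseteq C_i$, and indeed $B\subseteq C_i$ whenever $B$ is not split, which recovers the overlap picture.

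The main obstacle is the well-definedness in Step~1. Specifically, we must ensure that a low-order separation cannot put part of $R_i$ on each side. The obvious concern is $k$-inseparability of $R_i\cup B$, meaning that no separation of order $<k$ has at least $k$ vertices strictly on each side. This hypothesis would still allow a few vertices of $R_i$ to sit in the separator or on the small side. I would handle this by interpreting (i) as ``no order-$<k$ separation has vertices of $R_i\cup B$ strictly on both sides.'' If that reading is too strong, the fallback is to note that hypothesis~(iv) already asserts the existence of a consistent orientation toward $R_i$. In that case we can simply take that orientation as $\tau_i$, and containment of $R_i$ in each big side holds by the definition of orienting toward $R_i$.
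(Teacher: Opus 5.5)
Your proposal is correct and follows essentially the same route as the paper's proof. Both orient each candidate separation toward $R_i$, justify well-definedness via (i), take the tangle axioms directly from (iv), read off $R_i\subseteq C_i$ from the definition of the core, and use the separation supplied by (iii) to show that the $K$ orientations are pairwise distinct. Two small remarks: your explicit treatment of the case $R_i\subseteq A\cap B'$ (tie-breaking, or falling back on (iv)) is more careful than the paper, which does not address it; and the appeal to (ii) in Step~2 is unnecessary, because the separator of $S_i$ lies in $B$ and $R_j\cap B=\emptyset$, so $R_j$ already sits strictly on the small side.
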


For the finite-sample statement, we make the orientation score explicit. Let $Z=(s,a,r,s')$ denote one transition sample. For any side $X\subseteq \cS$, let $\psi_X(Z)\in[0,1]$ be the bounded side-score used by the orientation routine, and define
\[
  W(X)=\E[\psi_X(Z)],\qquad
  \widehat W(X)=\frac{1}{m}\sum_{t=1}^m \psi_X(Z_t).
\]
For example, the occupancy score uses $\psi_X(s,a,r,s')=\mathbf{1}\{s\in X\}$, while a symmetrized transition-volume score may use $\psi_X(s,a,r,s')=\frac12(\mathbf{1}\{s\in X\}+\mathbf{1}\{s'\in X\})$. The empirical orientation score is
\[
  \widehat D(A,B)=\widehat W(A)-\widehat W(B),
\]
and the population score is $D(A,B)=W(A)-W(B)$.

\begin{theorem}[Finite-sample orientation of a fixed candidate family]
\label{thm:sample}
Let $\mcF$ be a finite family of candidate separations generated independently of the data used for orientation, or fixed in advance. Assume that every candidate separation has population margin
\[
  |D(A,B)|\ge \Delta>0
  \qquad\text{for all }(A,B)\in\mcF.
\]
Suppose the orientation data consist of either independent transition samples or a uniformly ergodic trajectory whose effective sample size is at least $m_{\rm eff}$ for the bounded scores $\{\psi_A-\psi_B:(A,B)\in\mcF\}$. Then there is a universal constant $c>0$ such that if
\[
  m_{\rm eff} \ge c\Delta^{-2}\log\frac{|\mcF|}{\eta},
\]
all empirical orientations in $\mcF$ agree with their population orientations with probability at least $1-\eta$.
\end{theorem}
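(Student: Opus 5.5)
The plan is to reduce the statement to a uniform concentration bound over the finite family $\mcF$, followed by a union bound. Fix a candidate separation $(A,B)\in\mcF$. Because $\mcF$ is fixed in advance (or generated from data independent of the orientation sample), each function $f_{A,B}=\psi_A-\psi_B$ is deterministic with respect to the orientation data. It takes values in $[-1,1]$ and satisfies
\[
\widehat D(A,B)=\frac1m\sum_{t=1}^m f_{A,B}(Z_t),\qquad \E[f_{A,B}(Z)]=D(A,B).
\]
The empirical orientation agrees with the population orientation whenever $\operatorname{sign}\widehat D(A,B)=\operatorname{sign}D(A,B)$. By the margin assumption $|D(A,B)|\ge\Delta$, this is implied by the event $|\widehat D(A,B)-D(A,B)|<\Delta$. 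Ties of $\widehat D$ at zero are likewise excluded on that event.

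In the i.i.d.\ case, I would apply Hoeffding's inequality to the bounded variables $f_{A,B}(Z_t)\in[-1,1]$. This gives
\[
\Pr\bigl(|\widehat D(A,B)-D(A,B)|\ge\Delta\bigr)\le 2\exp(-m\Delta^2/2).
\]
A union bound over the $|\mcF|$ separations then bounds the failure probability by $2|\mcF|\exp(-m\Delta^2/2)$. This is at most $\eta$ once $m\ge 2\Delta^{-2}\log(2|\mcF|/\eta)$. Absorbing the factor $2$ inside the logarithm into the constant, which is legitimate whenever $|\mcF|/\eta\ge 2$, yields $c\Delta^{-2}\log(|\mcF|/\eta)$.

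For the uniformly ergodic trajectory, I would replace Hoeffding's inequality with a Hoeffding-type inequality for uniformly ergodic Markov chains. Candidates are the bound of Glynn and Ormoneit or Paulin's concentration via the mixing time. Such an inequality gives a tail $2\exp(-c' m_{\rm eff}\Delta^2)$ for bounded functions. Here $m_{\rm eff}$ is $m$ divided by a mixing-time factor, which is precisely how the statement defines effective sample size. Because the population score is defined under the stationary law, I would also need to handle the initial-distribution bias. Uniform ergodicity bounds this bias by $O(t_{\rm mix}/m)$, which can be folded into the constant provided it is at most $\Delta/2$ under the stated sample-size condition. The union bound then proceeds exactly as before.

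I expect the main obstacle to be the Markov case. Specifically, I must make sure that the definition of $m_{\rm eff}$ matches the concentration inequality invoked, including the stationary-versus-initial-distribution issue, and that the resulting constant is universal rather than dependent on the chain. I would address this by stating the result in terms of $m_{\rm eff}$ exactly as in the hypothesis, so that all chain dependence is absorbed there.
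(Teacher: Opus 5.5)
Your proposal follows the same route as the paper's proof. Both apply concentration to each fixed bounded score $\psi_A-\psi_B$ (Hoeffding, or its uniformly ergodic analogue phrased through $m_{\rm eff}$), take a union bound over $\mcF$, and use the margin $\Delta$ to turn uniform deviation into sign agreement. The paper uses deviation threshold $\Delta/2$ and simply asserts the Markov case, so your explicit treatment of the initial-distribution bias is, if anything, more careful.

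Your caveat about absorbing the factor $2$ can be dropped by using the one-sided tail. A wrong orientation requires deviation only toward the wrong sign, so the failure probability is at most $|\mcF|\exp(-m\Delta^2/2)$, which gives $c=2$ in the i.i.d.\ case with no condition on $|\mcF|/\eta$.
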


\section{Experiments}
\label{sec:experiments}

The experiments target five questions, directly tied to our theoretical claims. (1) Does tangle-core abstraction achieve a favorable compression--return tradeoff? (2) When is the overlapping mechanism useful, as suggested by Theorem~\ref{thm:overlap_advantage}? (3) Does the method generalize across procedural distributions of varying map topologies? (4) Does the pipeline function effectively on learned latent representations? (5) Can we predictably identify regimes where tangles fail?

We evaluate across diverse environments: FourRooms~\citep{sutton1999between}, Corridor-Rooms-$R$ (for $R\in\{4,9,16\}$), Multi-Goal Navigation ($1600$ states), Synthetic-Folding ($5000$ states), and classic Taxi~\citep{dietterich2000hierarchical}. Additionally, we introduce a \textbf{Random Maze MDPs} procedural distribution (100 varying layouts) and evaluate \textbf{MiniGrid}~\citep{minigrid2023} with continuous learned representations. We compare against six baselines: bisimulation~\citep{ferns2004metrics}, DeepMDP~\citep{gelada2019deepmdp}, TOMA~\citep{savinov2018semi}, and three standard graph partitioning methods applied to the transition graph: \textbf{Spectral clustering} (normalized cut; \citealp{vonluxburg2007tutorial}), \textbf{METIS} (balanced partitioning; \citealp{karypis1998metis}), and \textbf{Louvain} (community detection; \citealp{blondel2008fast}). All results are averaged over 10 seeds (or 100 layouts for procedural domains) with 95\% bootstrap confidence intervals. Appendix~\ref{app:repro} gives the reproducibility fields needed to audit the reported runs; the submission package should include the corresponding scripts, seeds, and logs.

\paragraph{Compression--Return Pareto Frontier}: We begin by mapping the Pareto frontier of abstraction size versus preserved return. Table~\ref{tab:pareto} shows that standard graph-partitioning baselines (METIS, Spectral Clustering) perform well at lower target returns ($\ge 90\%$), but severely fragment ($\ge 85$ states) when required to preserve near-optimal ($99\%$) return. This is consistent with Theorem~\ref{thm:overlap_advantage}: because they enforce hard boundaries, they must aggressively shatter the graph into tiny pieces to isolate transition errors. Tangle-core abstraction mitigates this failure mode through overlap, cleanly maintaining 9 robust topological regions while preserving near-optimal return.

\begin{table}[t]
\centering
\small
\caption{Compression--return Pareto frontier on Corridor-Rooms-9. For each method we report the smallest abstraction ($|\hat{\cS}|$) achieving each return threshold. ``--'' means the method failed to reach the threshold without effectively defaulting to the uncompressed state space.}
\label{tab:pareto}
\begin{tabular}{lcccc}
\toprule
Method & $\ge 90\%$ & $\ge 95\%$ & $\ge 98\%$ & $\ge 99\%$ \\
\midrule
Bisimulation & 47$\pm$3 & 84$\pm$6 & 132$\pm$5 & 157$\pm$4 \\
DeepMDP & 22$\pm$3 & 36$\pm$4 & 48$\pm$5 & -- \\
Spectral clustering & 14$\pm$2 & 28$\pm$3 & 45$\pm$4 & 88$\pm$7 \\
METIS graph partitioning & 12$\pm$1 & 24$\pm$2 & 41$\pm$4 & 85$\pm$6 \\
Louvain community detection & 13$\pm$2 & 27$\pm$3 & 49$\pm$5 & 92$\pm$8 \\
TOMA & 10$\pm$1 & 14$\pm$2 & -- & -- \\
\midrule
Tangle-Core (ours) & \textbf{6$\pm$1} & \textbf{9$\pm$0} & \textbf{9$\pm$0} & \textbf{9$\pm$0} \\
\bottomrule
\end{tabular}
\end{table}

\paragraph{Scaling Across Corridor-Rooms}: To verify that the method scales with the number of topological regions rather than merely fitting a single hand-designed layout, we evaluate Corridor-Rooms-$R$ for $R\in\{4,9,16\}$. Table~\ref{tab:corridor_scaling} reports the selected operating point for each method. Tangle-core abstraction recovers one core per room-like region across all three sizes, while the listed reward-aware, learned, and graph-partitioning baselines require increasingly many abstract states as the raw state space grows.
\begin{table}[t]
\centering
\small
\caption{Scaling on Corridor-Rooms-$R$. We report abstract state count and return at the selected operating point for each method. Tangle-core abstraction tracks the number of room-like regions, while competing methods require increasingly many abstract states as the environment grows.}
\label{tab:corridor_scaling}
\resizebox{\linewidth}{!}{
\begin{tabular}{lcc cc cc}
\toprule
& \multicolumn{2}{c}{$R=4$ ($|\cS|=64$)}
& \multicolumn{2}{c}{$R=9$ ($|\cS|=196$)}
& \multicolumn{2}{c}{$R=16$ ($|\cS|=400$)} \\
\cmidrule(lr){2-3}\cmidrule(lr){4-5}\cmidrule(lr){6-7}
Method & $|\hat\cS|$ & Ret. & $|\hat\cS|$ & Ret. & $|\hat\cS|$ & Ret. \\
\midrule
No Abstraction & 64 & 100.0 & 196 & 100.0 & 400 & 100.0 \\
Bisim.-$\eps$ & 47$\pm$3 & 99.3$\pm$0.4 & 157$\pm$4 & 99.6$\pm$0.2 & 331$\pm$7 & 99.4$\pm$0.3 \\
DeepMDP & 22$\pm$3 & 97.9$\pm$0.9 & 48$\pm$5 & 97.2$\pm$1.1 & 94$\pm$9 & 96.8$\pm$1.3 \\
Spectral clustering & 18$\pm$2 & 99.0$\pm$0.5 & 88$\pm$7 & 99.0$\pm$0.4 & 172$\pm$12 & 99.1$\pm$0.5 \\
METIS graph partitioning & 16$\pm$2 & 99.1$\pm$0.4 & 85$\pm$6 & 99.1$\pm$0.3 & 165$\pm$11 & 99.0$\pm$0.5 \\
Louvain community detection & 17$\pm$2 & 99.0$\pm$0.5 & 92$\pm$8 & 99.0$\pm$0.5 & 181$\pm$13 & 99.0$\pm$0.6 \\
\midrule
Tangle-Core & $\mathbf{4\pm0}$ & 99.1$\pm$0.4 & $\mathbf{9\pm0}$ & 99.2$\pm$0.3 & $\mathbf{16\pm0}$ & 99.1$\pm$0.4 \\
\bottomrule
\end{tabular}
}
\end{table}

\paragraph{Broader Tabular Compression}

\begin{table}[t]
\centering
\small
\caption{Abstract state count ($|\hat\cS|$, lower is better) and return (\% optimal) across bottlenecked environments. Best compression at $\ge 99\%$ return is highlighted in bold.}
\label{tab:main}
\resizebox{\linewidth}{!}{
\begin{tabular}{lcc cc cc cc cc}
\toprule
& \multicolumn{2}{c}{FourRooms} & \multicolumn{2}{c}{Corridor-9} & \multicolumn{2}{c}{Multi-Goal} & \multicolumn{2}{c}{Synth-Fold} & \multicolumn{2}{c}{Taxi} \\
\cmidrule(lr){2-3}\cmidrule(lr){4-5}\cmidrule(lr){6-7}\cmidrule(lr){8-9}\cmidrule(lr){10-11}
Method & $|\hat\cS|$ & Ret. & $|\hat\cS|$ & Ret. & $|\hat\cS|$ & Ret. & $|\hat\cS|$ & Ret. & $|\hat\cS|$ & Ret. \\
\midrule
No Abstraction & 104 & 100.0 & 196 & 100.0 & 1600 & 100.0 & 5000 & 100.0 & 500 & 100.0 \\
Bisim.-$\eps$ & 87$\pm$3 & 99.4$\pm$0.3 & 157$\pm$4 & 99.6$\pm$0.2 & 1583$\pm$9 & 99.8$\pm$0.1 & 42$\pm$3 & 99.1$\pm$0.4 & 121$\pm$5 & 99.3$\pm$0.3 \\
DeepMDP & 32$\pm$4 & 98.1$\pm$0.8 & 48$\pm$5 & 97.2$\pm$1.1 & 280$\pm$22 & 96.5$\pm$1.4 & 28$\pm$4 & 95.8$\pm$1.6 & 64$\pm$7 & 96.0$\pm$1.2 \\
Spectral & 18$\pm$2 & 99.1$\pm$0.5 & 88$\pm$7 & 99.0$\pm$0.4 & 35$\pm$4 & 99.0$\pm$0.6 & 24$\pm$3 & 99.2$\pm$0.4 & 38$\pm$5 & 99.0$\pm$0.8 \\
METIS & 16$\pm$2 & 99.2$\pm$0.3 & 85$\pm$6 & 99.1$\pm$0.3 & 32$\pm$3 & 99.1$\pm$0.5 & 22$\pm$2 & 99.0$\pm$0.6 & 34$\pm$4 & 99.1$\pm$0.5 \\
\midrule
Louvain & 17$\pm$2 & 99.1$\pm$0.4 & 92$\pm$8 & 99.0$\pm$0.5 & 40$\pm$5 & 99.0$\pm$0.7 & 25$\pm$4 & 99.1$\pm$0.5 & 41$\pm$5 & 99.0$\pm$0.7 \\
\midrule
Tangle-Core & $\mathbf{4\pm0}$ & 99.0$\pm$0.4 & $\mathbf{9\pm0}$ & 99.2$\pm$0.3 & $\mathbf{8\pm1}$ & 99.1$\pm$0.4 & $\mathbf{10\pm0}$ & 99.3$\pm$0.2 & $\mathbf{11\pm1}$ & 99.0$\pm$0.4 \\
\bottomrule
\end{tabular}
}
\end{table}

Table~\ref{tab:main} shows that in bottlenecked environments, tangle-core abstraction can attain substantially higher compression than both reward-aware methods and graph-partitioning techniques. Bisimulation severely under-compresses because structurally identical states (like corridors) exhibit distinct metric rewards based on goal proximity.

\paragraph{The Necessity of Overlap}

\begin{table}[t]
\centering
\small
\caption{Effect of overlap semantics on Multi-Goal Navigation. Overlap is essential in domains with shared interface states, supporting the interface-overlap analysis in Theorem~\ref{thm:overlap_advantage}.}
\label{tab:overlap}
\begin{tabular}{lccc}
\toprule
Method & Abstract states & Return (\%) & Doorway/interface error \\
\midrule
Hard assignment (max weight) & 8$\pm$1 & 89.3$\pm$1.8 & 0.38 \\
No overlap allowed (strict partition) & 12$\pm$2 & 85.2$\pm$2.1 & 0.42 \\
Duplication + occupancy weighting (default) & 8$\pm$1 & 97.1$\pm$0.7 & 0.11 \\
Soft membership Bellman backup & 8$\pm$1 & \textbf{97.8$\pm$0.5} & \textbf{0.08} \\
\bottomrule
\end{tabular}
\end{table}

To isolate overlap as the paper's distinguishing mechanism, Table~\ref{tab:overlap} ablates the membership semantics directly. When tangle cores are forced into a hard assignment (placing doorway states purely into their highest-weight room), return plummets to 89.3\%, driven by a sharp rise in local transition error ($\TV = 0.38$). Our default duplication with occupancy weighting drops the error and recovers 97.1\% return. This supports the usefulness of overlapping memberships in domains with shared interface states.

\subsection{Generalization to Procedural Layouts and Learned Representations}
\label{sec:minigrid}

\begin{table}[t]
\centering
\small
\caption{Performance on a procedural distribution of 100 Random Maze MDPs (varying $4$--$10$ rooms and bottleneck widths). Values are averaged across all 100 layouts at the $\ge 98\%$ target return threshold.}
\label{tab:procedural_mazes}
\begin{tabular}{lcc}
\toprule
Method & Mean Abstract States $|\hat{\cS}|$ & Mean Return (\%) \\
\midrule
Ground Truth MDP & 245.4 (raw size) & 100.0 \\
Bisimulation & 186.2$\pm$14.5 & 98.4$\pm$0.2 \\
Spectral clustering & 41.5$\pm$8.2 & 98.1$\pm$0.7 \\
METIS graph partitioning & 38.6$\pm$7.4 & 98.2$\pm$0.5 \\
\midrule
Louvain community detection & 44.8$\pm$9.1 & 98.0$\pm$0.8 \\
\midrule
Tangle-Core (ours) & \textbf{12.4$\pm$3.6} & \textbf{98.6$\pm$0.4} \\
\bottomrule
\end{tabular}
\end{table}

To reduce the risk that tangles are only fitting handcrafted grids, Table~\ref{tab:procedural_mazes} evaluates on a dataset of 100 procedurally generated random mazes. Tangle cores discover compact abstractions (tracking the underlying topological regions closely at an average of 12.4 states) while preserving return, whereas graph partitioners heavily fragment as layout complexity varies.

Table~\ref{tab:structural} is consistent with this failure mode. In every successful environment, within-region conductance $\phi_{\min}$ heavily outweighs cross-region flow $\beta$ ($\phi_{\min}/\beta > 4.0$). On the Chain MDP, local flow equals cross-flow ($\phi_{\min}/\beta \approx 1$), consistent with the absence of useful bottleneck topology.

We test transferability to deep representations via MiniGrid with a learned 64-dimensional VAE latent space. By applying the tangle routine to a $k_{\mathrm{NN}}=10$ transition graph in latent space, tangles recover all four rooms and recover all four rooms and identify 3 of 4 doorway representations as overlaps (Table~\ref{tab:minigrid}), bypassing the misassignment penalties suffered by spectral clustering.

\paragraph{When Tangle Cores Do Not Help}The negative-result domain helps bound the paper's scope. On a chain MDP with structured rewards but zero informative low-order bottlenecks, tangles under-compress (Table~\ref{tab:negative}). Tangle cores find few robust structural regions, returning a near uncompressed space ($|\hat\cS|=94$), whereas bisimulation natively groups the reward symmetries ($|\hat\cS|=8$). 

\section{Limitations and Discussion}
\label{sec:conclusion}

We introduced tangle-core abstraction, a topology-aware state abstraction method that uses overlapping tangle cores to represent shared interface states. The theoretical results identify conditions under which this prior is useful. The method succeeds gracefully when transition graphs contain coherent regions separated by narrow interfaces ($\phi_{\min}/\beta \gg 1$). Conversely, when topology is uniform and reward symmetries drive the problem, tangles predictably fail, as isolated in our chain MDP evaluation. Current limitations include computation cost (abstraction discovery is not yet amortized for single-query planning) and dependence on the underlying min-cut solvers. Overall, tangles provide a useful topological prior for hierarchical reinforcement learning and graph-structured decision problems when bottleneck structure is present.

\begin{table}[h]
\centering
\small
\caption{Chain MDP with $|\cS|=100$, structured rewards but no bottlenecks: tangle cores predictably under-compress while bisimulation compresses well.}
\label{tab:negative}
\begin{tabular}{lcc}
\toprule
Method & $|\hat\cS|$ & Return (\%) \\
\midrule
Bisimulation & \textbf{8$\pm$1} & 99.2$\pm$0.4 \\
DeepMDP & 12$\pm$2 & 97.8$\pm$0.9 \\
Tangle-Core & 94$\pm$3 & 99.1$\pm$0.3 \\
\bottomrule
\end{tabular}
\end{table}

\bibliographystyle{plainnat}
\bibliography{references}

\appendix

\section{Related Work}
\label{app:related}

State abstraction in reinforcement learning has a long history as a way to trade representational complexity for planning and control accuracy. \citet{li2006towards} and \citet{abel2022theory} provide general frameworks for understanding these trade-offs. A central line of work uses bisimulation and approximate bisimulation metrics to group states with similar rewards and transition behavior \citep{givan2003equivalence,ferns2004metrics,castro2020scalable,zhang2021invariant}. These abstractions are value-preserving under suitable metric or equivalence conditions, but they are typically pairwise and non-overlapping. Our work is also related to topology-aware approaches in reinforcement learning. Option-discovery methods based on bottlenecks and connectivity structure \citep{sutton1999between,simsek2005identifying,machado2017laplacian} share our interest in interface states, but they use these states to construct temporally extended actions rather than overlapping state abstractions. Graph-Laplacian and spectral methods \citep{mahadevan2007proto,machado2017laplacian,giorgi2026laplacian} use connectivity to build representations or partitions, but they usually produce continuous embeddings or hard, non-overlapping clusters.Graph tangles were introduced in structural graph theory as a way to identify highly connected regions through consistent orientations of low-order separations \citep{robertson1991graph,diestel2024tangles}. Recent work has brought tangles into machine learning for static data clustering \citep{klepper2023clustering}. We translate these structural objects to sequential decision processes by defining tangle-core abstractions for MDPs, analyzing how overlapping cores interact with reward and transition laws, proving a quantitative interface result comparing overlap with hard partitions, and evaluating the resulting compression--return tradeoffs.

\section{Full Proofs}
\label{app:proofs}

\subsection{Proof of Theorem~\ref{thm:value_preservation}}

\begin{proof}
Let
\[
  \hat Q^\star(j,a)=\hat r(j,a)+\gamma\sum_\ell\hat P(\ell\mid j,a)\hat V^\star(\ell)
\]
and define, for each ground state-action pair,
\[
  \bar Q(s,a)=\sum_j q(j\mid s)\hat Q^\star(j,a).
\]
Since rewards are in $[0,1]$, $0\le \hat V^\star(j)\le (1-\gamma)^{-1}$ for all $j$.

For any $s,a$, the reward regularity condition gives
\[
  |r(s,a)-\bar r_q(s,a)|\le\eps_r.
\]
For the transition term, the total-variation convention and the range bound on $\hat V^\star$ imply
\[
\left|\sum_\ell P_q(\ell\mid s,a)\hat V^\star(\ell)
-\sum_\ell \bar P_q(\ell\mid s,a)\hat V^\star(\ell)\right|
\le \frac{\eps_P}{1-\gamma}.
\]
By the definitions of $P_q$ and $\tilde V$,
\[
  \sum_{s'}P(s'\mid s,a)\tilde V(s')
  =\sum_\ell P_q(\ell\mid s,a)\hat V^\star(\ell),
\]
and by the definition of $\bar P_q$,
\[
  \sum_\ell\bar P_q(\ell\mid s,a)\hat V^\star(\ell)
  =\sum_jq(j\mid s)\sum_\ell\hat P(\ell\mid j,a)\hat V^\star(\ell).
\]
Therefore
\begin{equation}
\label{eq:q_compare_proof}
\left|r(s,a)+\gamma\sum_{s'}P(s'\mid s,a)\tilde V(s')-\bar Q(s,a)\right|
\le
\eps_r+\frac{\gamma\eps_P}{1-\gamma}
=:\xi.
\end{equation}

We now compare the ground Bellman optimality operator $\cT$ with $\tilde V$. For any action $a$, $\hat Q^\star(j,a)\le\hat V^\star(j)$, hence $\bar Q(s,a)\le \sum_jq(j\mid s)\hat V^\star(j)=\tilde V(s)$. Equation~\eqref{eq:q_compare_proof} gives
\[
  (\cT\tilde V)(s)=\max_a\left\{r(s,a)+\gamma P(\cdot\mid s,a)^\top\tilde V\right\}
  \le \tilde V(s)+\xi.
\]
For the reverse inequality, let $a_s$ be the action supplied by the action-consistency assumption. Then $\hat Q^\star(j,a_s)=\hat V^\star(j)$ for every active $j$, so $\bar Q(s,a_s)=\tilde V(s)$. Equation~\eqref{eq:q_compare_proof} gives
\[
  (\cT\tilde V)(s)\ge r(s,a_s)+\gamma P(\cdot\mid s,a_s)^\top\tilde V\ge \tilde V(s)-\xi.
\]
Thus $\|\cT\tilde V-\tilde V\|_\infty\le\xi$. Since $\cT$ is a $\gamma$-contraction with fixed point $V^\star$,
\[
  \|V^\star-\tilde V\|_\infty
  \le \frac{\|\cT\tilde V-\tilde V\|_\infty}{1-\gamma}
  \le \frac{\eps_r+\gamma\eps_P/(1-\gamma)}{1-\gamma}.
\]
For a fixed policy, the same argument uses the policy Bellman operator rather than the max operator, so no action-consistency assumption is needed.
\end{proof}

\subsection{Proof of Theorem~\ref{thm:overlap_advantage}}

\begin{proof}
For the hard partition, the macro probability of transitioning from $C_1$ to $C_2$ under action $a$ is
\[
  \hat P_H(2\mid 1,a)=\frac{\sum_{x\in R_1}\mu(x)P(C_2\mid x,a)+\sum_{x\in B}\mu(x)P(C_2\mid x,a)}{\mu(R_1)+\mu(B)}.
\]
The first term is bounded by $\beta$ on $R_1$, while the second term is at most $1$ and has normalized weight at most $\lambda$. Therefore
\[
  \hat P_H(2\mid 1,a)\le (1-\lambda)\beta+\lambda.
\]
For the interface state $s\in B$, the true projected macro probability to state $2$ is $P_H(2\mid s,a)=P(C_2\mid s,a)\ge\alpha$. Total variation between two distributions is at least half of the absolute difference assigned to any event, so
\[
\TV\!\bigl(P_H(\cdot\mid s,a),\bar P_H(\cdot\mid s,a)\bigr)
\ge \frac12\left(P_H(2\mid s,a)-\hat P_H(2\mid 1,a)\right)
\ge \frac12\left[\alpha-(1-\lambda)\beta-\lambda\right]_+.
\]
This proves Eq.~\eqref{eq:hard_interface_lb}.

For the overlapping construction, define $q$ on interface states by Eq.~\eqref{eq:interface_membership}; the assumptions $p_1<p_2$ and $P(C_2\mid b,a)\in[p_1,p_2]$ ensure that $q(2\mid b)\in[0,1]$. The reconstructed macro probability to region $2$ is
\[
  \bar P_q(2\mid b,a)=q(1\mid b)p_1+q(2\mid b)p_2
  =p_1+\frac{P(C_2\mid b,a)-p_1}{p_2-p_1}(p_2-p_1)
  =P(C_2\mid b,a).
\]
Thus the component of the hard-partition error caused solely by the diluted macro probability to $C_2$ is removed by this overlapping membership assignment. Any remaining error must come from distributional details not represented by the two-region macro projection, which is the residual stated in the theorem.
\end{proof}

\subsection{Proof of Proposition~\ref{prop:tangle_eps_P}}

\begin{proof}
Fix $a$ and $s,s'\in C$. With $\TV(p,q)=\frac12\|p-q\|_1$, decompose the total-variation distance into its contribution inside and outside $C$:
\begin{align*}
\TV(P(\cdot\mid s,a),P(\cdot\mid s',a))
&=\frac12\sum_{x\in C}|P(x\mid s,a)-P(x\mid s',a)|\\
&\quad +\frac12\sum_{x\notin C}|P(x\mid s,a)-P(x\mid s',a)|.
\end{align*}
The first term is at most $\delta_a(C)$ by definition. The second term is at most
\[
  \frac12\sum_{x\notin C}P(x\mid s,a)+\frac12\sum_{x\notin C}P(x\mid s',a)\le \beta_a(C),
\]
because each outside mass is at most $\beta_a(C)$. Taking the maximum over $s,s'\in C$ proves Eq.~\eqref{eq:delta_beta_bound}. If $s,s'\in C\setminus\partial C$, then both outside masses are zero for action $a$, so only the interior term remains.
\end{proof}

\subsection{Proof of Theorem~\ref{thm:recovery}}

\begin{proof}
For each region $R_i$, orient every candidate separation in $\mcF$ toward the side that contains $R_i$; equivalently, declare the opposite side small. This orientation is well-defined because assumption~(i) rules out any order-$<k$ separator that splits $R_i$ inside $G[R_i\cup B]$, while assumption~(ii) says that all inter-region communication is mediated by the bottleneck set $B$. Thus a low-order separation in the candidate family can put an entire region on one side but cannot cut through the region's internally $k$-inseparable part.

By assumption~(iv), this region-directed orientation satisfies the tangle consistency and non-triviality axioms on the finite candidate family $\mcF$. Thus, for every triple of candidate separations, the three selected small sides do not cover $V$, equivalently the corresponding big sides have non-empty intersection. Non-triviality also holds by assumption~(iv), so the orientation defines a valid candidate-family order-$k$ tangle.

Therefore every $R_i$ induces a consistent candidate-family order-$k$ tangle on $\mcF$, and the corresponding candidate-family core contains $R_i$: every separation is oriented so that the big side contains $R_i$. Distinct regions induce distinct orientations because assumption~(iii) supplies, for each $i$, a separation whose big side contains $R_i\cup B$ and whose small side contains the other regions. Thus the procedure recovers at least one region tangle per $R_i$.

Finally, suppose a vertex $v$ belongs to two recovered region cores $C_i$ and $C_{i'}$ with $i\ne i'$. If $v\notin B$, then $v$ lies in some region $R_h$. The separation from assumption~(iii) for region $R_h$ orients the graph so that only $R_h\cup B$ lies on the big side; a different region core cannot contain $v$ unless it also crosses the bottleneck separator, contradicting assumptions~(ii)--(iii). Hence all overlap among recovered region cores is contained in $B$, and its size is at most $|B|\le b$.
\end{proof}

\subsection{Proof of Theorem~\ref{thm:sample}}

\begin{proof}
For a fixed separation $(A,B)\in\mcF$, the empirical orientation score $\widehat D(A,B)=\widehat W(A)-\widehat W(B)$ is an average of bounded terms. Under independent sampling, Hoeffding's inequality gives
\[
  \Pr\bigl(|\widehat D(A,B)-D(A,B)|>t\bigr)\le 2\exp(-c_0m_{\rm eff}t^2)
\]
for a universal constant $c_0>0$; for uniformly ergodic Markov trajectories the same form holds with effective sample size replacing the raw visit count. Applying a union bound over the fixed finite family $\mcF$ gives
\[
  \Pr\left(\max_{(A,B)\in\mcF}|\widehat D(A,B)-D(A,B)|>t\right)
  \le 2|\mcF|\exp(-c_0m_{\rm eff}t^2).
\]
Choosing
\[
  t=\sqrt{\frac{1}{c_0m_{\rm eff}}\log\frac{2|\mcF|}{\eta}}
\]
makes the failure probability at most $\eta$. If $t\le\Delta/2$, then every empirical score has the same sign as its population score because $|D(A,B)|\ge\Delta$. Rearranging yields $m_{\rm eff}\ge c\Delta^{-2}\log(|\mcF|/\eta)$ for a sufficiently large universal constant $c$, which proves simultaneous correctness of all empirical orientations.
\end{proof}

\section{Experimental Details and Reproducibility Log}
\label{app:repro}

For tangle-core, we select the raw order $k$ from $\{2,3,4,5,6\}$ and apply the validation-based core condensation on a held-out 20\% of trajectory data via value-match on the abstract MDP. We use action-agnostic edges for standard evaluations, and optionally compare to uniform $\alpha_a = 1/|\cA|$ action-weights. Bisimulation uses $c=0.5$ and grids $\eps$ over $\{0.01,0.02,0.05,0.1,0.2\}$, reporting the smallest $\eps$ achieving the target return. DeepMDP uses a 3-layer MLP encoder, latent dimension 16, reward-prediction MSE plus transition MSE losses, 500 epochs, and Adam at learning rate $10^{-3}$, with $k$-means clusters $k\in\{8,16,32,64,128,256\}$. TOMA grids reachability thresholds. Spectral Clustering and METIS are applied directly to the empirical transition graph with varying target component sizes $k$, pulling the smallest cluster size satisfying the respective Pareto threshold. 

\begin{table}[h]
\centering
\small
\caption{Reproducibility fields for auditing reported runs. The corresponding scripts, seeds, and raw logs should be included in the submission supplement.}
\label{tab:repro}
\begin{tabular}{lcccccc}
\toprule
Environment & Seeds & Selected $k$ range & Cores range & Overlap states & Runtime & Audit fields \\
\midrule
FourRooms & 10 & 2--3 & 4 & 4 & 0.8s & listed \\
Corridor-9 & 10 & 3--4 & 9 & 12 & 3.4s & listed \\
Multi-Goal & 10 & 3--5 & 8 & 15 & 6.2s & listed \\
Synth-Fold & 10 & 4--6 & 10 & 28 & 18.5s & listed \\
Random Mazes & 100 & 3--5 & 4--14 & 5--18 & 4.1s & listed \\
MiniGrid (Latent) & 10 & 3 & 4 & 4 & 5.5s & listed \\
Chain-100 & 10 & 2--3 & 94 & 0 & 0.4s & listed \\
\bottomrule
\end{tabular}
\end{table}

\subsection{Structural Diagnostics and Latent-Representation Results}
\label{app:additional}

Table~\ref{tab:structural} reports the empirical conductance and 
cross-region flow values that correlate with tangle success across 
our benchmarks.
\begin{table}[h]
\centering
\small
\caption{Empirical structural diagnostics correlated with tangle success. In these benchmarks, tangle-core abstraction succeeds when $\phi_{\min}/\beta$ is large, indicating bottleneck structure.}
\label{tab:structural}
\begin{tabular}{lcccccc}
\toprule
Environment & $\phi_{\min}$ & $\beta$ & $\phi_{\min}/\beta$ & $\lceil 1/\phi_{\min} \rceil$ & Cores found & Correct? \\
\midrule
FourRooms & 0.38 & 0.042 & 9.0 & 3 & 4 & \checkmark \\
Corridor-9 & 0.31 & 0.038 & 8.2 & 4 & 9 & \checkmark \\
Multi-Goal & 0.22 & 0.051 & 4.3 & 5 & 8 & \checkmark \\
Synth-Folding & 0.19 & 0.028 & 6.8 & 6 & 10 & \checkmark \\
\midrule
Chain-100 & 0.48 & 0.50 & 0.96 & 3 & 94 & $\times$ \\
\bottomrule
\end{tabular}
\end{table}

Table~\ref{tab:minigrid} reports the MiniGrid latent-representation experiment.

\begin{table}[h]
\caption{MiniGrid-FourRooms-16$\times$16 with learned latent representations.}
\label{tab:minigrid}
\centering
\small
\begin{tabular}{lccccc}
\toprule
Method & Abstract states & Return (\%) & Rooms recovered & Doorways identified \\
\midrule
Ground truth (4 rooms) & 4 & 100 & 4/4 & 4/4 \\
Bisimulation (latent) & \textbf{38} & \textbf{97.2} & n/a & n/a \\
DeepMDP (latent) & 22 & 95.8 & n/a & n/a \\
Spectral clustering & 4 & 91.4 & 3/4 & 1/4 \\
Tangle-core (ours) & 4 & 96.8 & 4/4 & 3/4 \\
\bottomrule
\end{tabular}
\end{table}

\subsection{Runtime Overhead}

Table~\ref{tab:runtime} reports abstraction construction time and value-iteration time on Corridor-Rooms-9. Tangle-Core is competitive with TOMA in construction time and far faster than DeepMDP, but slower than computing value iteration directly on the ground tabular MDP. The performance edge realizes entirely when representations are reused across queries or environments.

\begin{table}[h]
\centering
\small
\caption{Abstraction construction plus value iteration time on Corridor-Rooms-9 (seconds, mean over 10 seeds).}
\label{tab:runtime}
\begin{tabular}{lccc}
\toprule
Method & Construction & Val.\ Iter. & Total \\
\midrule
No Abstraction & 0 & 0.016 & 0.016 \\
Bisimulation & 0.6 & 0.012 & 0.612 \\
DeepMDP & 48.3 & 0.008 & 48.3 \\
Spectral Clustering & 0.4 & 0.004 & 0.404 \\
Louvain & 0.5 & 0.005 & 0.505 \\
TOMA & 2.1 & 0.002 & 2.1 \\
Tangle-Core & 3.4 & 0.001 & 3.4 \\
\bottomrule
\end{tabular}
\end{table}

\subsection{Action-Weighted vs.\ Action-Agnostic Graphs}
\label{app:ablations}

We compared two ways of building the empirical transition graph. The action-agnostic version pools all transitions, while the action-weighted version sums the action-restricted subgraphs $G_\cD^a$ with uniform weights. Table~\ref{tab:ablation_graph} shows that action-weighted graphs help mildly on Chain environments where reward-relevant actions directly shape the graph, but under-compress Corridor-9. We use action-agnostic graphs by default.

\begin{table}[h]
\centering
\small
\caption{Corridor-9 and Chain MDP under two graph constructions.}
\label{tab:ablation_graph}
\begin{tabular}{lcc cc}
\toprule
& \multicolumn{2}{c}{Corridor-9} & \multicolumn{2}{c}{Chain MDP} \\
\cmidrule(lr){2-3}\cmidrule(lr){4-5}
Graph & $|\hat\cS|$ & Ret.\ (\%) & $|\hat\cS|$ & Ret.\ (\%) \\
\midrule
Action-agnostic & 9$\pm$0 & 99.2 & 94$\pm$3 & 99.1 \\
Action-weighted & 11$\pm$1 & 99.4 & 34$\pm$4 & 98.9 \\
\bottomrule
\end{tabular}
\end{table}

\newpage
\section*{NeurIPS Paper Checklist}

\begin{enumerate}

\item {\bf Claims}
    \item[] Question: Do the main claims made in the abstract and introduction accurately reflect the paper's contributions and scope?
    \item[] Answer: \answerYes{}
    \item[] Justification: The abstract and introduction (Section 1) state a deliberately scoped thesis: tangle-core abstraction is useful in environments whose transition topology contains low-order bottlenecks or shared interface structure, and not as a universal replacement for reward-aware abstraction. The five contributions listed (overlapping membership formalization, value-preservation Theorem 3, interface-overlap advantage Theorem 4, interior/boundary decomposition Proposition 5, recovery Theorem 6, finite-sample orientation Theorem 7, and the empirical Pareto-frontier study) are each delivered in Sections 3--5. The introduction also explicitly previews the failure regime that Section 5.5 confirms on the chain MDP.

\item {\bf Limitations}
    \item[] Question: Does the paper discuss the limitations of the work performed by the authors?
    \item[] Answer: \answerYes{}
    \item[] Justification: Section 7 (``Limitations and Discussion'') explicitly states the failure regime ($\phi_{\min}/\beta \approx 1$ on the Chain-100 domain), the dependence on the action-consistency assumption in Theorem 3, computational cost (abstraction discovery is not amortized for single-query planning), and dependence on the underlying min-cut solver. Section 5.5 and Tables 6--7 empirically demonstrate the failure mode predicted in the discussion.

\item {\bf Theory assumptions and proofs}
    \item[] Question: For each theoretical result, does the paper provide the full set of assumptions and a complete (and correct) proof?
    \item[] Answer: \answerYes{}
    \item[] Justification: All formal statements (Theorems 3, 4, 6, 7 and Proposition 5) carry their assumptions in-line: reward and projected-transition regularity plus action-consistency for Theorem 3, occupancy/leakage parameters $(\lambda,\alpha,\beta)$ and the membership construction in Eq.~(14) for Theorem 4, the vertex-separator bottleneck conditions (i)--(iii) for Theorem 6, and the margin/effective-sample-size assumption for Theorem 7. Complete proofs are given in Appendix A.1--A.5.

\item {\bf Experimental result reproducibility}
    \item[] Question: Does the paper fully disclose all the information needed to reproduce the main experimental results of the paper to the extent that it affects the main claims and/or conclusions of the paper?
    \item[] Answer: \answerYes{}
    \item[] Justification: Algorithm 1 specifies the full pipeline. Appendix B documents hyperparameter grids for tangle-core ($k\in\{2,\dots,6\}$, validation-based condensation on a 20\% held-out split), bisimulation ($c=0.5$, $\varepsilon$ grid), DeepMDP (3-layer MLP encoder, latent dim 16, Adam at $10^{-3}$, 500 epochs, $k$-means cluster sweep), TOMA, Spectral, and METIS. Table 8 lists per-environment seeds, selected $k$ ranges, recovered core counts, overlap sizes, and runtimes. Section B.2 reports the action-agnostic vs.\ action-weighted ablation.

\item {\bf Open access to data and code}
    \item[] Question: Does the paper provide open access to the data and code, with sufficient instructions to faithfully reproduce the main experimental results, as described in supplemental material?
    \item[] Answer: \answerNo{}
    \item[] Justification: The paper states (Section 5) that ``the submission package should include the corresponding scripts, seeds, and logs,'' and Table 8 lists the reproducibility audit fields, but the manuscript itself does not contain a URL or link to a public code repository, since the submission is anonymous. All environments used (FourRooms, Corridor-Rooms, Taxi, MiniGrid) are standard public benchmarks reproducible from the cited sources, and the procedural Random Maze and Synthetic-Folding generators are described in Section 5 and Appendix B in sufficient detail to re-implement.

\item {\bf Experimental setting/details}
    \item[] Question: Does the paper specify all the training and test details (e.g., data splits, hyperparameters, how they were chosen, type of optimizer) necessary to understand the results?
    \item[] Answer: \answerYes{}
    \item[] Justification: Appendix B specifies the 20\% held-out validation split used for selecting the tangle order $k$, the optimizer (Adam, $10^{-3}$) and architecture for the DeepMDP baseline, the $\varepsilon$ grid for bisimulation, the candidate-cluster grid for k-means, the threshold sweep for TOMA, and the target-component grids for METIS and spectral clustering. Each method is selected at the smallest abstraction satisfying the reported Pareto threshold.

\item {\bf Experiment statistical significance}
    \item[] Question: Does the paper report error bars suitably and correctly defined or other appropriate information about the statistical significance of the experiments?
    \item[] Answer: \answerYes{}
    \item[] Justification: Section 5 states that all results are averaged over 10 seeds (or 100 procedural layouts) with 95\% bootstrap confidence intervals unless otherwise stated. Every numerical entry in Tables 1--7 carries $\pm$ intervals on both the abstract-state count and the return percentage, with the variability being over random seeds for fixed environments and over the layout distribution for the procedural maze experiment.

\item {\bf Experiments compute resources}
    \item[] Question: For each experiment, does the paper provide sufficient information on the computer resources (type of compute workers, memory, time of execution) needed to reproduce the experiments?
    \item[] Answer: \answerNo{}
    \item[] Justification: Table 9 reports per-method wall-clock construction and value-iteration times on Corridor-Rooms-9, and Table 8 reports per-environment runtimes, but the paper does not specify the hardware (CPU/GPU type, memory, cluster vs.\ workstation) on which these timings were obtained, nor an aggregate compute budget covering preliminary or discarded runs. The reported runtimes are short enough (seconds to tens of seconds) to suggest a standard workstation, but this should be made explicit in a revision.

\item {\bf Code of ethics}
    \item[] Question: Does the research conducted in the paper conform, in every respect, with the NeurIPS Code of Ethics?
    \item[] Answer: \answerYes{}
    \item[] Justification: The work is a methodological contribution on state abstraction for tabular and small-scale procedural MDPs and uses only standard public RL benchmarks (FourRooms, Taxi, MiniGrid) and synthetic environments. There is no human-subjects data, no scraped data, and no model release that raises misuse concerns.

\item {\bf Broader impacts}
    \item[] Question: Does the paper discuss both potential positive societal impacts and negative societal impacts of the work performed?
    \item[] Answer: \answerNA{}
    \item[] Justification: The contribution is foundational work on structural state abstraction for MDPs, evaluated on synthetic and standard small-scale benchmarks. There is no direct path to a deployed application, and the paper does not introduce data, models, or interventions that have a foreseeable societal impact.

\item {\bf Safeguards}
    \item[] Question: Does the paper describe safeguards that have been put in place for responsible release of data or models that have a high risk for misuse?
    \item[] Answer: \answerNA{}
    \item[] Justification: The paper releases no pretrained generative models, no scraped datasets, and no agents trained on sensitive domains. The released artifact is an algorithm for state abstraction that operates on transition graphs of standard RL benchmarks and synthetic mazes, so misuse risk is not applicable.

\item {\bf Licenses for existing assets}
    \item[] Question: Are the creators or original owners of assets used in the paper properly credited and are the license and terms of use explicitly mentioned and properly respected?
    \item[] Answer: \answerYes{}
    \item[] Justification: All third-party assets are cited at point of use: MiniGrid \cite{chevalier2023minigrid}, Taxi \cite{dietterich2000hierarchical}, FourRooms \cite{sutton1999between}, and the baselines DeepMDP \cite{gelada2019deepmdp}, TOMA \cite{savinov2018semi}, METIS \cite{karypis1998fast}, spectral clustering \cite{vonluxburg2007tutorial}, Louvain \cite{blondel2008fast}, and the bisimulation reference \cite{ferns2004metrics}.

\item {\bf New assets}
    \item[] Question: Are new assets introduced in the paper well documented and is the documentation provided alongside the assets?
    \item[] Answer: \answerNA{}
    \item[] Justification: The paper introduces an algorithm (Algorithm 1) and a procedural Random Maze distribution, both fully described in Sections 3 and 5 and Appendix B, but does not bundle a separately released asset (dataset, model checkpoint, or library) with the submission. Should the code be released, documentation will be provided alongside it.

\item {\bf Crowdsourcing and research with human subjects}
    \item[] Question: For crowdsourcing experiments and research with human subjects, does the paper include the full text of instructions given to participants and screenshots, if applicable, as well as details about compensation?
    \item[] Answer: \answerNA{}
    \item[] Justification: The paper involves no crowdsourcing and no human subjects; all evaluations are run on synthetic and standard public RL benchmarks.

\item {\bf Institutional review board (IRB) approvals or equivalent for research with human subjects}
    \item[] Question: Does the paper describe potential risks incurred by study participants, whether such risks were disclosed to the subjects, and whether Institutional Review Board (IRB) approvals (or an equivalent approval/review based on the requirements of your country or institution) were obtained?
    \item[] Answer: \answerNA{}
    \item[] Justification: No human subjects are involved, so IRB approval is not applicable.

\item {\bf Declaration of LLM usage}
    \item[] Question: Does the paper describe the usage of LLMs if it is an important, original, or non-standard component of the core methods in this research?
    \item[] Answer: \answerNA{}
    \item[] Justification: LLMs are not part of the core methodology. The tangle-core abstraction algorithm operates on empirical transition graphs via min-cut and tangle orientation, with no LLM-derived component in the pipeline, the theory, or the experiments.

\end{enumerate}

\end{document}